\newcommand{\norm}[1]{\left\lVert#1\right\rVert}
\newcommand{\R}{{\mathbb{R}}}
\newcommand{\B}{{\mathcal B}}
\newcommand{\N}{{\mathbb{N}}}
\newcommand{\X}{{\mathbf{X}}}
\newcommand{\T}{{\mathbf{T}}}
\newcommand{\So}{{\mathbf{S}}}
\newcommand{\Obs}{{\mathcal{U}}}
\newcommand{\U}{{\mathbf{U}}}
\newcommand{\Y}{{\mathbf{Y}}}
\newcommand{\cen}{\sigma}
\newcommand{\rad}{\rho}
\newcommand{\ex}{\mathsf{e}}
\newtheorem{theorem}{Theorem}[section]
\newtheorem{assumption}{Assumption}
\newtheorem{definition}[theorem]{Definition}
\newtheorem{lemma}[theorem]{Lemma}
\newtheorem{remark}[theorem]{Remark}
\newtheorem{problem}[theorem]{Problem}
\newenvironment{proof}{\par\noindent\textbf{Proof.} }{\hfill$\blacksquare$\par}
\title{Real-Time Spatiotemporal Tubes for Dynamic Unsafe Sets
\thanks{ This work was supported in part by the ARTPARK. The work of Ratnangshu Das was supported by the Prime Minister’s Research Fellowship from the Ministry of Education, Government of India.}
}
\author{
 Ratnangshu Das \\
  Robert Bosch Centre for Cyber-Physical Systems\\
  IISc, Bengaluru, India\\
  \texttt{ratnangshud@iisc.ac.in} \\
   \And
 Siddhartha Upadhyay \\
  Robert Bosch Centre for Cyber-Physical Systems\\
  IISc, Bengaluru, India\\
  \texttt{siddharthau@iisc.ac.in} \\
   \And
 Pushpak Jagtap \\
  Robert Bosch Centre for Cyber-Physical Systems\\
  IISc, Bengaluru, India\\
  \texttt{pushpak@iisc.ac.in} \\
}
\begin{document}
\maketitle

\begin{abstract}
    This paper presents a real-time control framework for nonlinear pure-feedback systems with unknown dynamics to satisfy reach-avoid-stay tasks within a prescribed time in dynamic environments. To achieve this, we introduce a real-time spatiotemporal tube (STT) framework. An STT is defined as a time-varying ball in the state space whose center and radius adapt online using only real-time sensory input. A closed-form, approximation-free control law is then derived to constrain the system output within the STT, ensuring safety and task satisfaction. We provide formal guarantees for obstacle avoidance and on-time task completion. The effectiveness and scalability of the framework are demonstrated through simulations and hardware experiments on a mobile robot and an aerial vehicle, navigating in cluttered dynamic environments.
\end{abstract}

\section{Introduction}
Autonomous systems must safely and efficiently execute complex task specifications with strict time constraints. Tasks such as reaching a target, avoiding obstacles, and remaining within safe bounds, collectively known as temporal reach-avoid-stay (T-RAS) specifications, are essential in applications like autonomous driving, drone delivery, and robotic exploration. They also serve as building blocks in high-level task-planning frameworks using temporal logic \cite{jagtap2020formal, kloetzer2008fully}. It becomes even more challenging when the system dynamics are unknown and the environment is dynamic, with obstacles that change unpredictably. In such settings, there is a growing need for real-time control strategies that ensure safety and prescribed-time task completion.

Several approaches have been proposed for navigation in dynamic environments. Classical techniques such as Artificial Potential Fields (APF) \cite{APF_global} uses virtual electrostatic-like forces to guide the agent toward the goal while avoiding obstacles, but struggles with local minima in cluttered environments. 
Similarly, the Dynamic Window Approach (DWA) \cite{fox2002dynamic} offers real-time collision avoidance via velocity sampling but lacks formal safety guarantees and scales poorly in dense, dynamic environments.
Other path planning algorithms \cite{Path_Planning_review} include graph-based methods, like A* and Dijkstra, and sampling-based methods \cite{sampling_review}, like PRM and RRT, that can generate feasible paths. However, these solutions typically require separate tracking controllers, lack formal guarantees, and fail to enforce prescribed-time requirements.

Optimization-based approaches, such as Model Predictive Control (MPC) \cite{MPC} and Control Barrier Functions (CBFs) \cite{CBF}, offer real-time safety enforcement under known dynamics. Significant efforts have been made in works such as \cite{CBF-APF,CBF-RRT1} to integrate CBFs with APF or sampling-based planners to improve safety in dynamic environments. In \cite{tayal2024collision}, the authors proposed a Collision Cone Control Barrier Function (C3BF) framework to enable safe trajectory tracking in dynamic environments. The idea of combining CBF with control Lyapunov function (CLF) has also been used in \cite{CLF_CBF_motion} to avoid obstacles in real time. While effective, these methods typically require accurate system models and online optimization, making them less suitable for high-dimensional or uncertain systems.

Funnel-based methods have emerged as an elegant way to design feedback controllers that ensure trajectories remain within guaranteed bounds \cite{Funnel_1}. These methods have been effective for reachability and tracking \cite{Funnel_2}, but tasks such as avoiding unsafe regions remain challenging \cite{lindemann2021funnel}. Some studies \cite{RB} integrate path planning with funnel-based tracking by planning paths within an extended free space, created by obstacle expansion \cite{KDF_1}. However, they only address static obstacles, which limits their use in dynamic environments.

The Spatiotemporal Tubes (STT) framework \cite{das2024prescribed} offers a time-varying, goal-directed tube in state space that ensures safety and prescribed-time task completion. It has also been extended to handle more complex tasks \cite{STT_omega, STT_STL} and multi-agent systems \cite{STT_Multi}. However, the approach in \cite{das2024prescribed} constructs tubes using circumvent functions, which can introduce abrupt changes in tube geometry and are restricted to static unsafe sets. {In contrast, the sampling-based optimization approach in \cite{das2024spatiotemporal} can handle time-varying unsafe sets, but it requires complete knowledge of obstacle trajectories in advance and involves computationally expensive offline synthesis. These limitations make existing methods unsuitable for real-time use in environments where obstacles may change dynamically.}

In this work, we introduce a {real-time STT} framework to address these challenges. Specifically, we propose a spherical STT, modeled as a time-varying ball in the state space whose center and radius {adapt online based on real-time sensory input.} By constraining the unknown system to remain within this evolving tube, we ensure that it avoids the dynamic unsafe set and reaches the target within a prescribed time.
{The main contributions of the work can be summarized as:
\begin{itemize}
    \item A real-time STT formulation that ensures safe prescribed-time convergence in environments with dynamic unsafe sets. The approach relies only on current sensor data and does not need any offline computation or prior knowledge of unsafe set trajectories.
    \item A computationally lightweight and model-free controller that keeps the system trajectory within the STT under unknown dynamics and disturbances, providing formal safety and timing guarantees suitable for real-time operation.
\end{itemize}
}
We validate the proposed framework through extensive simulation and hardware experiments on a 2D omnidirectional mobile robot and a 3D Quadrotor. {We also present detailed comparisons with state-of-the-art algorithms, demonstrating better computation efficiency and success rate.}

\section{Preliminaries and Problem Formulation}
\label{sec:prelim}
\subsection{Notation}
For $a,b\in\N$ with $a\leq b$, we denote the closed interval in $\N$ as $[a;b] := \{a, a+1, \ldots, b\}$. A ball centered at $\cen \in \mathbb{R}^n$ with radius $\rad \in \mathbb{R}^+$ is defined as $\mathcal{B}(\cen, \rad) := \{ x \in \mathbb{R}^n \mid \|x - \cen\| \leq \rad \}$. We use $x\circ y$ to represent the element-wise multiplication where $x,y\in \R^n$.  An identity matrix of order $n\in \N$ is represented using $I_n$. All other notation in this paper follows standard mathematical conventions.

\subsection{System Definition}
We consider a class of control-affine, multi-input multi-output (MIMO), nonlinear pure-feedback systems. {This structure is common in many practical systems such as torque-controlled manipulators, acceleration-controlled quadrotors, and magnetic levitation systems \cite{Funnel_1, das2024spatiotemporal, PPCfeedback}.}
It is described by the following dynamics:
\begin{align} \label{eqn:sysdyn}
    &\dot{x}_k(t) = f_k(\overline{x}_k(t)) + g_k(\overline{x}_k(t))x_{k+1}(t) + w_k(t), \notag \\
    &\hspace{5cm} \text{for all } k\in [1;N-1], \notag\\
    &\dot{x}_{N}(t) = f_N(\overline{x}_N(t)) + g_N(\overline{x}_N(t))u(\overline{x}_N, t) + w_N(t), \\
    &y(t) = x_1(t), \nonumber
\end{align}
where for each $t\in\R^+_0$ and $k\in[1;N]$,
\begin{itemize}
    \item $x_k(t) = [x_{k,1}(t), \ldots, x_{k,n}(t)]^\top \in {\X}_k \subset \mathbb{R}^{n}$ is the state,
    \item $\overline{x}_k(t) := [x_1^\top(t),...,x_k^\top(t)]^\top \in \overline{\X}_k = \prod_{j=1}^k \X_j \subset \mathbb{R}^{nk} $,
    \item $u(\overline{x}_N, t) \in \mathbb{R}^n$ is control input vector,
    \item $w_k(t) \in \mathbf{W} \subset \R^n$ is unknown bounded disturbance, and
    \item $y(t) = [x_{1,1}(t), \ldots, x_{1,n}(t)]\in \Y=\X_1$ is the output.
\end{itemize}

The functions $f_k: \overline{\X}_k \rightarrow \mathbb{R}^n$ and $g_k: \overline{\X}_k \rightarrow \mathbb{R}^{n \times n}$ satisfy the following assumptions:
\begin{assumption}\label{assum:lip}
    For all $k \in [1;N]$, the functions $f_k$ and $g_k$ are unknown but locally Lipschitz continuous.
\end{assumption}
\begin{assumption}[\cite{PPC1,PPC0}] \label{assum:pd}
    For all $\overline{x}_k \in \overline{\X}_k$, the symmetric part of $g_k$, defined as $g_{k,s}(\overline{x}_k) := \frac{g_k(\overline{x}_k)+g_k(\overline{x}_k)^\top}{2}$ is uniformly sign definite with known sign. Without loss of generality, we assume $g_{k,s}(\overline{x}_k)$ is positive definite, that is, there exists a constant $\underline{g_k}\in\mathbb R^+, \forall k \in [1;N]$ such that
    $0 < \underline{g_k} \leq \lambda_{\min} (g_{k,s}(\overline{x}_k)), \forall \ \overline{x}_k \in \overline{\X}_k$,
    where $\lambda_{\min}(\cdot)$ denotes the smallest eigenvalue of a matrix.
\end{assumption}
This assumption ensures that in \eqref{eqn:sysdyn} global controllability is guaranteed, i.e., $g_{k,s}(\overline{x}_k) \neq 0,$ for all $\overline{x}_k \in \overline{\X}_k$.

\subsection{Problem Formulation}
Let the output $y(t)$ of system \eqref{eqn:sysdyn} be subject to a \textit{temporal reach-avoid-stay (T-RAS) specification} \cite{das2024spatiotemporal} defined next.

\begin{definition}[Temporal Reach-Avoid-Stay (T-RAS) task]\label{def:tras}
Given the output-space $\mathbf{Y}=\X_{1}$, a prescribed time $t_c \in \R^+$, a time-varying unsafe set $\U: \R_0^+ \rightarrow \mathcal{P}(\Y)$, an initial set $\So \subset \mathbf{Y} \setminus \U(0)$, and a target set $\T \subset \mathbf{Y} \setminus \U(t_c)$, we say the output $y(t)$ satisfies the T-RAS task if: 
$$y(0) \in \So, \quad y(t_c) \in \T, \text{ and } y(t) \in \mathbf{Y} \setminus \U(t), \forall t \in [0, t_c].$$
\end{definition}

\begin{remark}
    The unsafe set $\U(t)$ is defined as a union of $n_o$ time-varying balls, $\Obs^{(j)}(t)$ for $j \in [1;n_o]$, each centered around a dynamic obstacle: 
    $$\U(t) = \bigcup_{j=1}^{n_o} \Obs^{(j)}(t), \text{ with } \Obs^{(j)}(t):= \mathcal{B}(o^{(j)}(t), \rad_o^{(j)}(t)),$$ 
    where $o^{(j)}(t) \in \R^n$ and $\rad_o^{(j)}(t) \in \R_0^+$ denote the time-varying center and radius, capturing the region surrounding the $j^{th}$ dynamic obstacle. Since these regions are defined independently, it allows for modeling multiple, disjoint, and independently evolving unsafe regions.
\end{remark}

We now state the control problem addressed in this work.
\begin{problem}[Real-time T-RAS Control]\label{prob1}
Given the system \eqref{eqn:sysdyn} under Assumptions \ref{assum:lip}-\ref{assum:pd}, and a T-RAS task as defined in Definition \ref{def:tras}, synthesize a \textit{real-time}, \textit{approximation-free}, and \textit{closed-form} control law $u(\overline{x}_N, t)$ that guarantees the output trajectory $y(t)$ satisfies the T-RAS specification.
\end{problem}

To solve Problem~\ref{prob1}, we utilize the STT framework, which defines a time-varying region in the output space that remains safe and goal-directed throughout the time horizon.

\begin{definition}\label{def:stt}
    Given a T-RAS task in Definition \ref{def:tras}, a spatiotemporal tube (STT), $\Gamma(t) = \mathcal{B} (\cen(t), \rad(t))$, is characterized by a time-varying center $\cen: \R_0^+ \rightarrow \mathbb{R}^n$ and radius $\rad: \R_0^+ \rightarrow \mathbb{R}^+$, if the following holds
    \begin{subequations}\label{eqn:stt}
    \begin{align}
        &\rad(t) > 0, \forall t \in [0, t_c],\\
        &\Gamma(0) \subseteq \So,\ \Gamma(t_c) \subseteq \T,\\
        &\Gamma(t) \cap \U(t) = \emptyset, \forall t \in [0, t_c].
    \end{align}
    \end{subequations}
\end{definition}
\begin{remark}
If one designs a control law that constrains the output trajectory within the STT, i.e.,
\begin{align} \label{eqn:stt_con}
    y(t) \in \Gamma(t), \forall t \in [0, t_c],
\end{align}
then one can ensure the satisfaction of T-RAS specification.
\end{remark}

{For solving T-RAS tasks, \cite{das2024spatiotemporal} synthesized STTs offline, assuming full knowledge of future obstacle trajectories. This limits applicability to real-world settings where only real-time sensory information is available. In this work, we overcome this by synthesizing and adapting STTs online using only current environment data. This enables safe operation in dynamic and partially unknown environments.} In the next section, we describe the real-time STT synthesis approach.

\section{Designing Spatiotemporal Tube (STT)} \label{sec:tube}
We begin by selecting the points $s=[s_1,...,s_n]\in \text{int}(\So)$ and $\eta=[\eta_1,...,\eta_n]\in int(\T)$ in the interior of the initial set ($\So$) and the target set ($\T$). Around these points, we define balls of radius $d_S,d_T\in \mathbb{R}^+$, denoted by $\hat\So$ and $\hat\T$, as follows:
\begin{align}
    \hat\So &=\B(s,d_S):=\{x\in \mathbb{R}^n|\norm{x-s}\leq d_S\}\label{eqn:start_state}\\
    \hat\T &=\B(\eta,d_T):=\{x\in \mathbb{R}^n|\norm{x-\eta}\leq d_T\}\label{eqn:targ_state}
\end{align}
such that $\hat \So\subset\So$ and $\hat\T\subset \T$.
As introduced in Definition~\ref{def:stt}, the STT $\Gamma(t) = \mathcal{B}(\cen(t), \rad(t))$ is defined by a time-varying center $\cen: \R_0^+ \rightarrow \R^n$ and radius $\rad: \R_0^+ \rightarrow \R^+$.
Additionally, to ensure a safe approach to the target, we make the following separation assumption:
\begin{assumption}\label{ass_pmin}
    At time $t=t_c$, the unsafe set is separated from the STT center $\cen(t_c)$ by at least a known minimum distance $\rad_{max}\in \R^+$, i.e.,
    $\forall x \in \U (t_c), \|x - \cen(t_c)\| >  \rad_{max}. $
\end{assumption}
{Note that this assumption does not restrict obstacles from being near the target, but requires them to move away as $t$ approaches $t_c$ to ensure a safe, collision-free entry into the target.}

Now, the evolution of the center $\cen(t)$ is governed by the following dynamics:
\begin{align}\label{eqn:cen_dynamic}
    \dot{\cen}(t)&=k_1\frac{t_c}{t_c-t}(\eta-\cen(t)) \notag \\
    &+\sum_{j=1}^{n_o}\Big (k_{2,j}m_j(t) + k_{3,j}v_j(t) \Big )  \theta_j(t), \ \cen(0)=s,
\end{align}
where $k_1>1/t_c$ and $k_{2,j},k_{3,j} \in \R^+$ are arbitrary positive constants. The switching function $\theta_j(t)$ activates the obstacle avoidance condition in the second part of Equation~\eqref{eqn:cen_dynamic} when the STT center $\cen(t)$ approaches the $j$-th dynamic obstacle. 
\begin{equation}\label{eqn:switch}
    \theta_j(t) = 
    \begin{cases}
    &\hspace{-0.7cm}\frac{1}{\norm{\cen(t) - o^{(j)}(t)}-\rad_o^{(j)}(t)}-\frac{1}{\rad_{max}},\\ & \quad\quad\quad\quad\text{if } \| \cen(t) - o^{(j)}(t) \| - \rad_o^{(j)}(t) \leq \rad_{max} \\
    0, & \hspace{1.5cm}\text{otherwise},
    \end{cases}    
\end{equation}
where $\rad_{max} \leq \min(d_S,d_T)$ is the maximum allowable tube radius and can also be interpreted as the sensing radius for detecting nearby obstacles. This implies that the avoidance part is engaged only when the STT center comes within a distance of $\rad_{max}$ of the $j^{th}$ unsafe region. Setting $\rad_{max} \leq \min(d_S,d_T)$ ensures that the tube remains inside $\So $ at $t=0$ and $\T$ at $t=t_c$.

The avoidance term in Equation~in \eqref{eqn:cen_dynamic} is governed by two vectors $ m_j(t), v_j(t) \in \mathbb{R}^n $, where for all $j \in [1;n_o]$
$$ m_j(t) = \frac{\cen(t) - o^{(j)}(t)}{\left( \| \cen(t) - o^{(j)}(t) \| - (\rad_o^{(j)}(t)+\rad_{min})\right)^3}, $$
and $ v_j(t) \in \mathbb{R}^n $ lies in the null space of $m_j(t)$, i.e., $ m_j^\top(t) v_j(t) = 0.$ $\rad_{min} \in \R^+$ is the minimum tube radius.

The tube radius $\rad(t)$ is dynamically adjusted according to the proximity of the tube to obstacles, and evolves as:
\begin{equation}\label{eqn:radius_dynamics}
\dot{\rad}(t)=\frac{\ex^{-\nu d{(t)}}\dot{ d}(t)}{\big ( \ex^{-\nu\rad_{max}}+\ex^{-\nu d(t)}\big)},
\end{equation}
{where $\nu \in \R^+$ is an arbitrary smoothing parameter} and $d(t)$ denotes the smooth minimum distance to obstacles:
\begin{align}\label{eqn:rho}
     d(t) &= -\frac{1}{\nu}\ln \left( \sum_{j=1}^{n_o}\ex^{-\nu \hat d_j(t)} \right), \\
     \hat d_j(t)&=\| \cen(t) - o^{(j)}(t) \| - \rad_o^{(j)}(t), \forall j\in [1;n_o].    \notag
\end{align}

The intuition behind \eqref{eqn:cen_dynamic} is as follows: the first term pulls the STT center $\cen$ toward the target point $\eta$, ensuring convergence in the prescribed time $t_c$. The second term activates only when $\theta_j(t) \neq 0$, i.e., when the STT center is near the $j$-th unsafe set. In that case, the normal vector $m_j(t)$ and the orthogonal vector $v_j(t)$ bend the tube around the unsafe set, ensuring collision avoidance. 
The radius dynamics in \eqref{eqn:radius_dynamics} complements this by smoothly shrinking the tube toward $\rad_{min} \in \R^+$ as it approaches obstacles (i.e., when $ d(t)$ decreases) and expanding toward $\rad_{max}$ when far from unsafe set. 

Given a time-varying center $\cen: \R_0^+ \rightarrow \R^n$ and radius $\rad: \R_0^+ \rightarrow \R^n$, governed by the dynamics in Equations \eqref{eqn:cen_dynamic} and \eqref{eqn:radius_dynamics}, respectively, we define the STT $\Gamma(t) = \mathcal{B} (\cen(t), \rad(t))$ as a closed ball in $\R^n$ centered at $\cen(t)$ with radius $\rad(t)$:
\begin{equation}\label{eqn:stt_ball}
    \Gamma(t) := \{ x\in \R^n \mid \|x-\cen(t)\| \leq \rad(t)\}, \quad \forall t \in [0, t_c] .
\end{equation}
{
\begin{remark}
The parameters $k_1$, $k_{2,j}$, and $k_{3,j}$ control the convergence and avoidance behavior, determining how aggressively the system approaches the target and reacts to nearby obstacles. Larger values make the response faster but can increase control effort. The bounds $\rad_{min}$ and $\rad_{max}$ set the minimum safety margin and the maximum allowed deviation from the nominal path. In practice, $\rad_{min}$ should be at least the robot’s size, while $\rad_{max}$ depends on the desired flexibility and environment density. These parameters can be tuned empirically to balance safety, smoothness, and responsiveness. 
\end{remark}
}

The next theorem guarantees that the designed STT adheres to the following three key conditions for satisfying T-RAS specifications. First, the tube reaches the target set within the prescribed time $t = t_c$. Second, the tube remains entirely outside the unsafe set for all times $t \in [0, t_c]$, ensuring safety. Finally, the radius of the tube remains strictly positive throughout the motion.

\begin{theorem}\label{theorem_tube}
The STT $\Gamma(t)$ in \eqref{eqn:stt_ball} meets the following to ensure satisfaction of the T-RAS specification:
\begin{enumerate}
    \item[(i)] Reaches the target within prescribed time: $\Gamma(t_c) \subseteq \T$.
    \item[(ii)] Avoids the unsafe set: 
    $\Gamma(t) \cap \U(t) = \emptyset$,  $\forall t \in [0, t_c]$.
    \item[(iii)] The radius stays positive: $\rad(t) \in \mathbb{R}^+, $ $\forall t \in [0, t_c]$.
\end{enumerate}
\end{theorem}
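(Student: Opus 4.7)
The proof splits into the three assertions, which I would handle in the order (iii), (ii), (i). For (iii), the key observation is that the right-hand side of \eqref{eqn:radius_dynamics} is exactly $\tfrac{d}{dt}\phi(d(t))$, where $\phi(z):=-\tfrac{1}{\nu}\log\bigl(\ex^{-\nu\rad_{max}}+\ex^{-\nu z}\bigr)$ is the log-sum-exp soft-min of $z$ and $\rad_{max}$. Integrating with the natural initialization $\rad(0)=\phi(d(0))$ yields $\rad(t)=\phi(d(t))$ throughout $[0,t_c]$. Elementary manipulation of the logarithm then gives $\phi(z)<\min(z,\rad_{max})$, and $\phi(z)>0$ whenever $z$ exceeds an explicit threshold depending on $\nu$ and $\rad_{max}$. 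Thus (iii) reduces to producing a suitable positive lower bound on $d(t)$, and we simultaneously harvest the pointwise inequality $\rad(t)<d(t)$ used in (ii).

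For (ii), the triangle inequality gives $\Gamma(t)\cap\Obs^{(j)}(t)=\emptyset$ iff $\hat d_j(t)>\rad(t)$. Since the soft-min obeys $d(t)\leq\min_j\hat d_j(t)$ and the closed form above gives $\rad(t)<d(t)$, it suffices to prove that $\cen(t)$ never reaches the inflated obstacle boundary $\|\cen-o^{(j)}\|=\rad_o^{(j)}+\rad_{min}$. For this I would differentiate the barrier candidate $V_j(t):=\bigl(\|\cen(t)-o^{(j)}(t)\|-\rad_o^{(j)}(t)-\rad_{min}\bigr)^{-1}$ along \eqref{eqn:cen_dynamic}. On the activation set $\{\theta_j>0\}$, the repulsive term $k_{2,j}m_j\theta_j$ is purely radial and pushes $\dot V_j$ downward by a term of order $-(\hat d_j-\rad_{min})^{-4}$, while the target-pulling term, the Lipschitz motion of $o^{(j)}$, and contributions from other obstacles remain bounded; the tangential term $k_{3,j}v_j\theta_j$ drops out of $\dot V_j$ by orthogonality with the gradient of $V_j$. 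This rules out finite-time blow-up of $V_j$, giving $\hat d_j(t)>\rad_{min}$ uniformly, hence $d(t)>0$, and (ii) follows.

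For (i), Assumption~\ref{ass_pmin} together with local continuity of the obstacle trajectories forces $\theta_j\equiv 0$ on some left neighborhood $(t_c-\epsilon,t_c]$ for every $j$. Over that interval \eqref{eqn:cen_dynamic} collapses to the linear time-varying ODE $\dot\cen=k_1\tfrac{t_c}{t_c-t}(\eta-\cen)$, with closed-form solution $\eta-\cen(t)=(\eta-\cen(t_c-\epsilon))\bigl(\tfrac{t_c-t}{\epsilon}\bigr)^{k_1 t_c}$. The hypothesis $k_1>1/t_c$ makes the exponent $k_1 t_c>1$, so the right-hand side vanishes as $t\to t_c^-$ and $\cen(t_c)=\eta$. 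Combined with $\rad(t_c)=\phi(d(t_c))<\rad_{max}\leq d_T$, this yields $\Gamma(t_c)\subseteq\B(\eta,d_T)=\hat\T\subseteq\T$, proving (i).

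The hard step will be the barrier argument for (ii): one must dominate the target-pull, the obstacle's own Lipschitz velocity, and cross-terms from other simultaneously active obstacles by the cubic blow-up of $m_j$, uniformly as $\hat d_j\to\rad_{min}$. The tangential $v_j$ contribution is killed by orthogonality, but the cross-term from a second nearby obstacle is delicate and will likely require either an implicit separation hypothesis (the $\rad_{max}$-neighborhoods of distinct obstacles are disjoint) or a piecewise argument that tracks obstacle activation/deactivation events.
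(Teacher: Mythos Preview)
Your proposal follows essentially the same route as the paper: the closed-form soft-min identity $\rho(t)=\phi(d(t))$, a barrier argument showing the center cannot reach the inflated obstacle boundary (the paper uses the squared-distance function $J^{(j)}=\|\cen-o^{(j)}\|^2-(\rad_o^{(j)}+\rad_{min})^2$ rather than your reciprocal $V_j$, but the blow-up analysis is the same), and the explicit solution of the target-pull ODE near $t_c$ combined with $\rho(t_c)<\rad_{max}\leq d_T$. Your final paragraph on cross-obstacle terms is well placed---the paper's displayed computation of $\dot J^{(j)}$ silently drops the contributions from obstacles $j'\neq j$, so you are in fact more careful than the original on exactly this point.
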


\begin{proof}
We now examine each claim individually.

(i) 
By Assumption~\ref{ass_pmin}, the tube is far enough from all unsafe sets at time $t = t_c$; so $\theta^{(j)}(t_c) = 0$, for all $j \in [1;n_o]$.
Thus, at $t = t_c$, the dynamics of the center simplify to:
\begin{equation}\label{updated_cent}
    \dot{\cen}=k_1\frac{t_c}{t_c-t}(\eta-\cen(t)).
\end{equation}
Solving this equation we obtain $\cen(t) = \eta + C(t_c-t)^{k_1 t_c}$, where $C$ is a constant determined by the initial condition. Thus, the STT center approaches $\eta$ as $t$ approaches $t_c$, $\cen(t_c) = \eta$, with convergence rate determined by $k_1 \in \R^+$. 

Further, under Assumption~\ref{ass_pmin}, the radius of the tube expands to its default in the absence of surrounding unsafe sets at $t=t_c$. Therefore, $\rad(t_c) = \rad_{max} \leq d_T$, from which we can conclude that $\Gamma(t_c) = \mathcal{B}(\eta, \rad_{max}) \subseteq \mathcal{B}(\eta, d_T) = \hat{\T} \subseteq \T$, proving the first condition.

(ii) For all $j \in [1;n_o]$, define the time-varying function,
\begin{equation}
    J^{(j)}(t)=(\cen(t)-o^{(j)}(t))^\top(\cen(t)-o^{(j)}(t))-(\rad^{(j)}_o(t)+\rad_{min})^2\nonumber
\end{equation}
which measures the squared distance between the STT center and the center of the $j^{th}$ unsafe set, offset by the safety margin $\rad^{(j)}_o(t)+\rad_{min}$.
The time derivative of $J^{(j)}(t)$ is
\begin{align}
    \dot{J}^{(j)}(t)&=2(\cen(t)-o^{(j)}(t))^\top\dot{\cen}(t)-2(\cen(t)-o^{(j)}(t))^\top\dot{o}^{(j)}(t) \notag \\
    & \qquad \qquad \qquad -2(\rad_o^{(j)}+\rad_{min})\dot\rad^{(j)}_o(t).
\end{align}
We analyze $\dot{J}^{(j)}(t)$ on the boundary of the safe margin around each obstacle, $\|\cen(t) - o^{(j)}(t)\| = \rad_o^{(j)}(t) + \rad_{min}.$
Substituting the expression for $\dot{\cen}(t)$ into $\dot{J}^{(j)}(t)$ yields:
\begin{align}\label{eqn:j_dot}
     &\dot{J}^{(j)}(t) = 2k_1\frac{t_c}{t_c-t}(\cen(t)-o^{(j)}(t))^\top (\eta-\cen(t)) \notag \\
     &+ 2k_{2,j}\Big(\frac{1}{\norm{\cen(t) - o^{(j)}(t)}-\rad_o^{(j)}(t)}-\frac{1}{\rad_{max}}\Big)\notag\\
     &\frac{\norm{\cen(t) - o^{(j)}(t)}^2}{\left( \| \cen(t) - o^{(j)}(t) \| - (\rad_o^{(j)}(t)  + \rad_{min})\right)^3} \notag \\
     &+ 2k_{3,j}\Big(\frac{1}{\norm{\cen(t) - o^{(j)}(t)}-\rad_o^{(j)}(t)}-\frac{1}{\rad_{max}}\Big) \notag\\
     &(\cen(t)-o^{(j)}(t))^\top m_j(t) - 2(\cen(t)-o^{(j)}(t)) \dot{o}^{(j)}(t)\\
     &-2(\rad_o^{(j)}+\rad_{min})\dot\rad^{(j)}(t). \notag
\end{align}
As $\| \cen(t) - o^{(j)}(t) \| \rightarrow \rad_o^{(j)}(t) + \rad_{min}$, the denominator in the second term approaches zero, making this term dominant and positive. As a result, $\dot{J}^{(j)}(t) > 0$ near the boundary.
The STT center is initially at a safe distance from the $j$-th unsafe set, i.e., $| \cen(0) - o^{(j)}(0) | > \rad_o^{(j)}(0) + \rad_{min}$, which implies $J^{(j)}(0) > 0$. Since $\dot{J}^{(j)}(t) > 0$ as $\| \cen(t) - o^{(j)}(t) \| \rightarrow \rad_o^{(j)}(t) + \rad_{min}$, the function $J^{(j)}(t)$ cannot decrease to zero or become negative. Therefore, $J^{(j)}(t) > 0$ holds for all $t \in [0, t_c]$, implying 
$$\| \cen(t) - o^{(j)}(t) \| > \rad_o^{(j)}(t) + \rad_{min} \ \text{ for all } \ t \in [0,t_c].$$
Hence, the STT center maintains a minimum separation of $\rad_{min}$ from the $j$-th obstacle at all times. 
Repeating this argument for all $j \in [1 ;n_o]$, we conclude that the STT center maintains a safe distance from each unsafe set for all time.

Now, to guarantee that the tube $\Gamma(t)=\mathcal{B}(\cen(t),\rad(t))$ does not intersect with any unsafe set, it suffices to show that:
\begin{align}\label{eqn:radprove}
    \rad(t)\leq \norm{\cen-o^{(j)}}-\rad_o^{(j)}(t),\forall j \in [1,..,n_o],
\end{align} 
We verify this using the solution of the radius dynamics given in Equation~\eqref{eqn:radius_dynamics}, with $d(t)$ defined in Equation~\eqref{eqn:rho}:
\begin{equation}\label{eqn:rad1}
    \rad(t) = -\frac{1}{\nu} \ln \left( \ex^{- \nu \rad_{max}} + \ex^{- \nu  d(t)} \right).
\end{equation} 
\\
Now, consider the following two cases:
\\
\textbf{Case 1:} $\forall j \in [1,n_o], \rad_{max} \leq \norm{\cen(t)-o^{(j)}(t)}-\rad_o^{(j)}(t)$:
\begin{align*}
     \rad(t) &= -\frac{1}{\nu} \ln \left( \ex^{- \nu \rad_{max}} + \ex^{- \nu  d(t)} \right) \\
     &\leq\min\Big(\min_{j={1,..,n_o}}\Big(\norm{\cen(t)-o^{(j)}(t)}-\rad_o^{(j)}\Big),\rad_{max}\Big) \\
     &\leq \rad_{max}.
\end{align*}
Therefore, for all $j \in [1,n_o]$, we have
$\rad(t) \leq \rad_{max} \leq \norm{\cen(t)-o^{(j)}(t)}-\rad_o^{(j)}(t),$
satisfying condition~\eqref{eqn:radprove}.
\\
\textbf{Case 2:} $\exists \hat j \in [1,n_o], \rad_{max} > \norm{\cen(t)-o^{(\hat j)}(t)}-\rad_o^{(\hat j)}(t)$:
\begin{align*}
     \rad(t) &= -\frac{1}{\nu} \ln \left( \ex^{- \nu \rad_{max}} + \ex^{- \nu  d(t)} \right) \\
     &\leq\min\Big(\Big(\norm{\cen(t)-o^{(\hat j)}(t)}-\rad_o^{(\hat j)}\Big),\rad_{max}\Big) \\
     &\leq \norm{\cen(t)-o^{(\hat j)}(t)}-\rad_o^{(\hat j)}(t),
\end{align*}
ensuring condition~\eqref{eqn:radprove} holds.

In both scenarios, \eqref{eqn:radprove} is satisfied, guaranteeing that the STT $\Gamma(t)$ does not intersect any unsafe set at any time:
$\Gamma(t) \cap \U(t) = \emptyset, \forall t \in [0, t_c].$

(iii)
From part (ii), we established that for all $j \in [1;n_o]$, the STT center $\cen(t)$ maintains a minimum safety distance of $\rad_o^{(j)}(t) + \rad_{min}$ from each unsafe set, i.e., for all $t \in [0,t_c]$,
$$\norm{\cen(t)-o^{(j)}(t)}\geq \rad_o^{(j)}(t) + \rad_{min} \implies  d(t) \geq \rad_{min},$$
Substituting this inequality into the closed-form expression for the radius in Equation~\eqref{eqn:rad1}, we obtain:
$$\rad(t) \geq -\frac{1}{\nu} \ln \left( \ex^{- \nu \rad_{max}} + \ex^{- \nu \rad_{min}} \right) > 0, \quad \forall t \in [0,t_c].$$
Thus, the STT radius remains strictly positive at all times.

Hence, the STT $\Gamma(t)$ reaches the target $\T$ at the prescribed time $t_c$, avoids the unsafe set $\U(t)$, and maintains a guaranteed positive radius throughout. This completes the proof that the proposed STT satisfies the T-RAS specification.
\end{proof}

\begin{lemma}\label{lemma:cont}
    The STT center $\cen(t)$, radius $\rad(t)$ and their time derivatives $\dot{\cen}(t)$, $\dot{\rad}(t)$ are all continuous and uniformly bounded for all $t \in [0,t_c]$.
\end{lemma}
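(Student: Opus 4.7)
The plan is to leverage Theorem~\ref{theorem_tube}, which already guarantees that the STT center $\cen(t)$ maintains a strictly positive separation $\rho_{min}$ from every obstacle safety margin and that $\rad(t)$ is strictly positive. Using these facts, I will (i) extract uniform lower bounds on the quantities appearing in the denominators of the right-hand sides of \eqref{eqn:cen_dynamic} and \eqref{eqn:radius_dynamics}, (ii) control the $1/(t_c-t)$ singularity in \eqref{eqn:cen_dynamic} via the explicit form of $\cen(t)$ near $t_c$, and (iii) conclude continuity and boundedness by combining continuity of the solution of a well-posed ODE with compactness of $[0,t_c]$.

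First I would argue well-posedness and boundedness of $\cen(t)$ itself. For $t\in[0,t_c)$, the right-hand side of \eqref{eqn:cen_dynamic} is smooth in $\cen$ away from the singular surfaces $\|\cen-o^{(j)}\|=\rho_o^{(j)}$ and $\|\cen-o^{(j)}\|=\rho_o^{(j)}+\rho_{min}$. Theorem~\ref{theorem_tube}(ii) and its proof show that $J^{(j)}(t)>0$ and $\dot J^{(j)}(t)>0$ whenever $J^{(j)}(t)\to 0^+$, so the trajectory is repelled from these surfaces. Hence the maximal solution exists on $[0,t_c)$, stays in a compact tube-safe region, and by continuity on compacta there exists $\epsilon>0$ with $\|\cen(t)-o^{(j)}(t)\|-\rho_o^{(j)}(t)\ge \rho_{min}+\epsilon$ uniformly in $t$ and $j$. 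In particular $\cen(t)$ is continuous on $[0,t_c)$ and remains in a bounded set.

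Next I would handle the terminal time. By Assumption~\ref{ass_pmin}, all switching functions $\theta_j$ vanish in a neighborhood of $t_c$, so \eqref{eqn:cen_dynamic} reduces to \eqref{updated_cent} whose closed-form solution $\cen(t)=\eta+C(t_c-t)^{k_1 t_c}$ gives $\cen(t)\to\eta$. Since $k_1 t_c>1$, the derivative
\[
\dot\cen(t)=-C\,k_1 t_c\,(t_c-t)^{k_1 t_c-1}\longrightarrow 0 \quad\text{as } t\to t_c,
\]
so the $1/(t_c-t)$ factor in \eqref{eqn:cen_dynamic} is cancelled and $\dot\cen$ extends continuously to $t_c$. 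Combined with the uniform separation from obstacles on $[0,t_c-\delta]$ (so every denominator in $\theta_j,\,m_j$ is bounded below by a positive constant, and $v_j$ can be chosen, e.g., as a fixed-magnitude orthonormal completion, to be bounded), $\dot\cen(t)$ is continuous and uniformly bounded on all of $[0,t_c]$, and hence so is $\cen(t)$.

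Finally I would verify the radius statements using the closed-form expression \eqref{eqn:rad1}. The smooth minimum distance $d(t)$ depends continuously on $\cen(t)$, $o^{(j)}(t)$, $\rho_o^{(j)}(t)$ and, by part (iii) of Theorem~\ref{theorem_tube}, satisfies $d(t)\ge \rho_{min}$; together with boundedness of the obstacle data this gives a two-sided uniform bound on $d(t)$, and consequently on $\rad(t)\in[-\tfrac{1}{\nu}\ln(e^{-\nu\rho_{max}}+e^{-\nu\rho_{min}}),\,\rho_{max}]$. Differentiating \eqref{eqn:rho} yields
\[
\dot d(t)=\sum_{j=1}^{n_o} \frac{e^{-\nu\hat d_j(t)}}{\sum_i e^{-\nu\hat d_i(t)}}\,\dot{\hat d}_j(t),
\]
which is a convex combination of the $\dot{\hat d}_j(t)$; each $\dot{\hat d}_j$ involves $\dot\cen$, $\dot o^{(j)}$, $\dot\rho_o^{(j)}$ divided by $\|\cen-o^{(j)}\|$, all of which are continuous and uniformly bounded by the previous paragraphs and the assumed regularity of the obstacle motion. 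Substituting into \eqref{eqn:radius_dynamics}, whose denominator is bounded below by $e^{-\nu\rho_{max}}>0$, delivers continuity and uniform boundedness of $\dot\rad(t)$ on $[0,t_c]$, completing the proof.

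The main obstacle is step~(ii): reconciling the $1/(t_c-t)$ blow-up of the attractive term with the algebraic decay of $\eta-\cen(t)$ and simultaneously ruling out any obstacle-induced singularity near $t_c$. Both are resolved by exploiting $k_1 t_c>1$ together with Assumption~\ref{ass_pmin}, which decouples the terminal convergence analysis from the avoidance dynamics.
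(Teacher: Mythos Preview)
Your proposal is correct and follows the same overall approach as the paper: use the separation from obstacles established in Theorem~\ref{theorem_tube} to keep all denominators in \eqref{eqn:cen_dynamic} and \eqref{eqn:radius_dynamics} bounded away from zero, and then read off continuity and boundedness of $\cen,\dot\cen,\rad,\dot\rad$. Your treatment is in fact more careful than the paper's terse argument; in particular, you explicitly neutralize the $1/(t_c-t)$ singularity via the closed form $\cen(t)=\eta+C(t_c-t)^{k_1 t_c}$ together with $k_1 t_c>1$ and Assumption~\ref{ass_pmin}, a point the paper's proof of the lemma omits and only addresses in the subsequent remark.
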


\begin{proof}
From the radius dynamics \eqref{eqn:radius_dynamics} and the use of a smooth approximation of the $\min$ function, both $\rad(t)$ and $\dot{\rad}(t)$ are continuous and bounded for all $t \in [0,t_c]$. Moreover, since $\dot{J}^{(j)} > 0$ near the unsafe boundary, the STT center is repelled from the unsafe set, ensuring that $\| \cen(t) - o^{(j)}(t) \|$ stays strictly greater than $\rad_o^{(j)} + \rad_{min}$ at all times. This implies that $\cen(t)$ and $\dot{\cen}(t)$, which depend smoothly on unsafe sets and the target, are also continuous and bounded over the time horizon.
\end{proof}

\begin{remark}
    In Equation~\eqref{eqn:cen_dynamic}, the first term approaches zero as $t \rightarrow t_c$, provided that $k_1 t_c > 1$, ensuring the continuity of $\dot{\cen}(t)$. This results in a smooth deceleration as the robot approaches the target, avoiding chatter near the goal. The condition $k_1 t_c > 1$ is not restrictive in practice, as a very small $t_c$ would require an unrealistically fast motion, which is typically impractical for physical systems.
\end{remark}
{
\begin{remark}
    The proposed real-time STT framework adapts the tube dynamically based on local, real-time obstacle information. As it does not perform a global search, the method is sound but not complete, i.e., it can guarantee safe task satisfaction when a feasible STT is found, but does not ensure that such a tube can always be found.
\end{remark}
}

\section{Controller Design}
We now derive a closed-form, {model-free} control law to constrain the system output within the STT \eqref{eqn:stt_con}. {It is important to note that, unlike traditional STT formulations \cite{das2024spatiotemporal} that used hyper-rectangular tubes, this work introduces a spherical (ball-shaped) STT. This modification requires a distinct control law and Lyapunov-based stability analysis, although the overall backstepping-like structure \cite{PPCfeedback} of the derivation remains similar.} We first design an intermediate input $r_2$ to enforce the tube constraint on the output, and then recursively define intermediate signals $r_{k+1}$ so that each state $x_k$ tracks its reference $r_k$, with $u = r_{N+1}$ as the control input.

The steps of the control design are as follows.

\textbf{Stage $1$}: Given the STT $\Gamma(t)$ in \eqref{eqn:stt_ball}, define the normalized error $e_1(x_1,t)$ and the transformed error $\varepsilon_1(x_1,t)$ as
\begin{align}
e_1(x_1,t) = \frac{\norm{x_1(t) - \cen(t)}}{\rad(t)},
\varepsilon_1(x_1,t) = \ln\left( \frac{1 + e_1(x_1,t)}{1 - e_1(x_1,t)} \right).\nonumber
\end{align}
The intermediate control input $r_2(x_1,t)$ is then given by 
\begin{equation}
    r_2(x_1,t) = -\kappa_1 \varepsilon_1(x_1,t) \big( x_1(t)-\cen(t) \big), \kappa_1 \in \R^+.
\end{equation}

\textbf{Stage $k$} ($k \in [2;N]$): 
To ensure $x_k$ tracks the reference signal $r_k$ from Stage $k-1$, we define a time-varying bound: $\gamma_{k,i}(t) = (p_{k,i} - q_{k,i})\ex^{-\mu_{k,i}t} + q_{k,i}$, and enforce,
    $-\gamma_{k,i}(t) \leq (x_{k,i}-r_{k,i}) \leq \gamma_{k,i}(t), \forall (t,i) \in \R_0^+ \times [1;n],$
where, $\mu_{k,i} \in \R_0^+$, and $p_{k,i}> q_{k,i} \in \R^+$ are chosen such that $|x_{k,i}(0) - r_{k,i}(0)| \leq p_{k,i}$.
Now, define the normalized error $e_k(x_{k},t)$, the transformed error $\varepsilon_k(x_{k},t)$ and $\xi_k(x_{k},t)$ as follows
\begin{subequations} \label{eq:stage k}
    \begin{align}
    e_k(x_{k},t) &= [e_{k,1}(x_{k,1},t), \ldots, e_{k,n}(x_{k,n},t)]^\top \\
    &= (\textsf{diag}(\gamma_{k,1}(t),\ldots,\gamma_{k,n}(t)))^{-1} \left(x_{k} - r_k \right), \notag \\
    \varepsilon_k(x_{k},t) &= \big[\ln\left(\frac{1+e_{k,1}(x_{k,1},t)}{1-e_{k,1}(x_{k,1},t)}\right), \ldots, \notag \\
    &\hspace{2em}\ln\left(\frac{1+e_{k,n}(x_{k,n},t)}{1-e_{k,n}(x_{k,n},t)}\right) \big]^\top, \\
        \xi_k(x_{k},t) &= 4 \big(\textsf{diag}(\gamma_{k,1}(t),\ldots,\gamma_{k,n}(t)) \big)^{-1} \notag \\
        &\hspace{5em}(I_n-\textsf{diag}(e_k \circ e_k))^{-1}.
\end{align}
\end{subequations}
The next intermediate control input $r_{k+1}(\overline{x}_{k},t)$ is then:
\begin{equation*}
    r_{k+1}(\overline{x}_{k},t) = - \kappa_k\xi_k(x_{k},t)\varepsilon_k(x_{k},t), \kappa_k \in \R^+.
\end{equation*}
At stage $N$, this intermediate input is the actual control input:
\begin{equation*}
    u(\overline{x}_{N},t) = - \kappa_N\xi_N(x_{N},t)\varepsilon_N(x_{N},t), \kappa_N \in \R^+.
\end{equation*}

We now state the main theorem guaranteeing that this controller enforces the desired T-RAS behavior.

\begin{figure*}[t]
    \centering
    \includegraphics[width=0.90\textwidth]{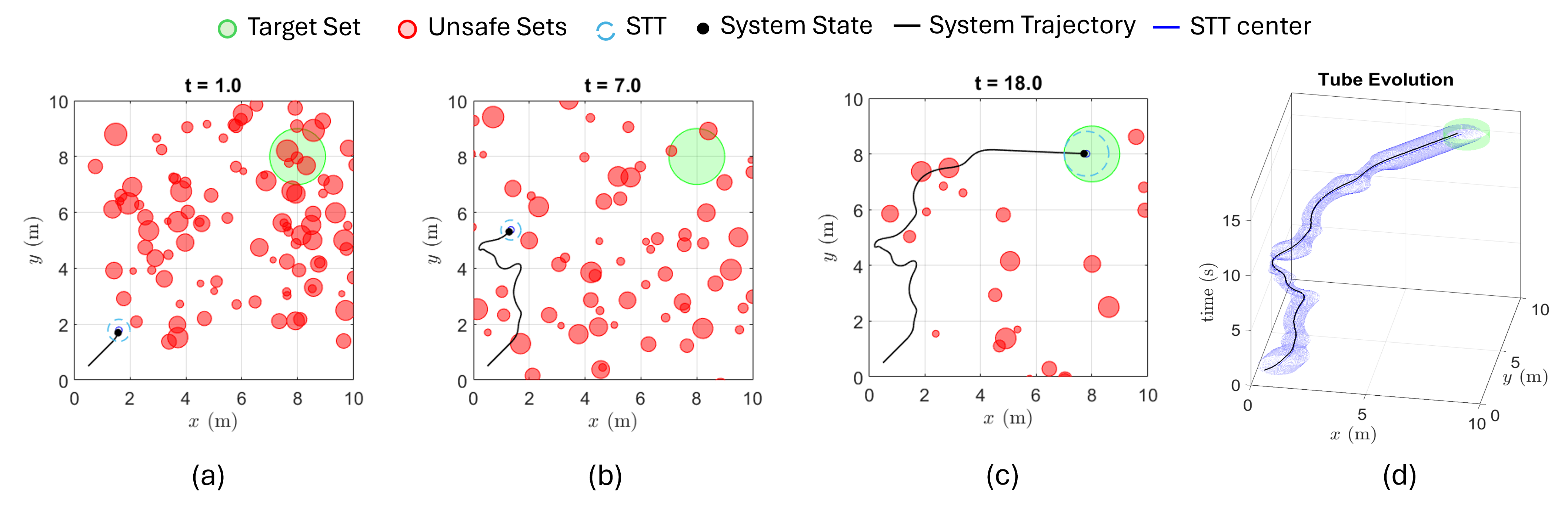}
    \caption{Omnidirectional Mobile Robot. (a)--(c) System trajectory at different time instants. (d) STT evolution. \href{https://youtu.be/BjgmxghPdWk}{[Video]}}
    \label{fig:omni_sim}
\end{figure*}

\begin{theorem} \label{theorem_ras}
    Consider the nonlinear MIMO system \eqref{eqn:sysdyn} under Assumptions \ref{assum:lip} - \ref{assum:pd}, with a T-RAS task (Definition~\ref{def:tras}), and the STT $\Gamma(t)$ from \eqref{eqn:stt_ball}.
    If the initial output is within the STT: $y(0) \in \Gamma(0)$, 
    then the controller
    \begin{align}\label{eqn:Control_ras}
        r_2(x_1,t) &= -\kappa_1 \varepsilon_1(x_1,t) \left( x_1(t)-\cen(t) \right), \kappa_1 \in \R^+, \notag \\
        r_{k+1}(\overline{x}_{k},t) &= - \kappa_k\xi_k(x_{k},t)\varepsilon_k(x_{k},t), k \in [2;N-1], \notag \\
        u(\overline{x}_{N},t) &= - \kappa_N\xi_N(x_{N},t)\varepsilon_N(x_{N},t),
    \end{align}    
    ensure that the system output remains within the STT: 
    $$y(t) = x_1(t) \in \Gamma(t), \forall t \in [0, t_c],$$
    thereby satisfying the desired T-RAS specification.
\end{theorem}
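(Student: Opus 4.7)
The strategy is a prescribed-performance backstepping argument in the spirit of \cite{PPCfeedback}, adapted to the spherical tube. Since $\varepsilon_1$ blows up exactly when $e_1\to 1$, and each $\varepsilon_{k,i}$ blows up exactly when $|e_{k,i}|\to 1$, it suffices to establish uniform boundedness of every transformed error on the maximal interval of existence: at stage $1$ this gives $e_1(t)<1$, i.e.\ $y(t)\in\Gamma(t)$, and Theorem \ref{theorem_tube} then delivers the T-RAS specification.

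First I would invoke Assumption \ref{assum:lip} together with Lemma \ref{lemma:cont} to conclude that the closed-loop vector field is locally Lipschitz on the open set
\[\Omega := \{(t,\overline{x}_N) : e_1<1,\ |e_{k,i}|<1 \text{ for all } k\in[2;N],\ i\in[1;n]\}.\]
The hypothesis $y(0)\in\Gamma(0)$, together with a suitable choice of the parameters $p_{k,i}$ at higher stages, places the initial condition strictly inside $\Omega$, so standard ODE theory yields a unique maximal solution on $[0,\tau_{\max})\subseteq[0,t_c]$ that remains in $\Omega$. The theorem reduces to $\tau_{\max}=t_c$, which I would prove by contradiction.

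For stage $1$ I take $V_1=\tfrac{1}{2}\varepsilon_1^2$, so $\dot V_1=\varepsilon_1\cdot\tfrac{2}{1-e_1^2}\cdot\dot e_1$. Expanding $\dot e_1$ from $e_1=\|x_1-\cen\|/\rad$ and substituting $\dot x_1=f_1+g_1 r_2+g_1(x_2-r_2)+w_1$, the intermediate law $r_2=-\kappa_1\varepsilon_1(x_1-\cen)$ combined with Assumption \ref{assum:pd}, which provides $(x_1-\cen)^\top g_1(x_1-\cen)\geq\underline{g_1}\|x_1-\cen\|^2$, yields a dominant negative contribution proportional to $-\tfrac{e_1}{1-e_1^2}\varepsilon_1^2$. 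The residual terms coming from $f_1,w_1,\dot\cen,\dot\rad$ are bounded on $\Omega$ by continuity and Lemma \ref{lemma:cont}, while the cross-stage residual $g_1(x_2-r_2)$ is controlled by Young's inequality in terms of $\|\varepsilon_2\|$. Iterating this computation with $V_k=\tfrac{1}{2}\varepsilon_k^\top\varepsilon_k$ for $k\in[2;N]$ and using $g_{k,s}\succeq\underline{g_k} I_n$ produces $\dot V_k\leq -c_k\|\varepsilon_k\|^2+d_k\|\varepsilon_{k+1}\|^2+\text{bounded}$, with the convention $\varepsilon_{N+1}\equiv 0$ at the top since $u$ replaces $x_{N+1}$. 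A weighted combination $V=\sum_k\lambda_k V_k$, with $\lambda_k$ chosen so that each $c_{k+1}\|\varepsilon_{k+1}\|^2$ absorbs the preceding $d_k\|\varepsilon_{k+1}\|^2$, then gives $\dot V\leq -\alpha V+\beta$ on $[0,\tau_{\max})$, hence every $\varepsilon_k$ stays uniformly bounded. By the log transform this forces $e_1,|e_{k,i}|$ to remain uniformly bounded away from $1$, so the solution cannot exit $\Omega$, contradicting $\tau_{\max}<t_c$.

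The main obstacle I expect is bookkeeping the cross-stage couplings so that the weighted Lyapunov function is strictly decreasing: the residual $\|\varepsilon_{k+1}\|^2$ generated at stage $k$ must be strictly dominated by the negative contribution at stage $k+1$, which forces a specific ordering of the weights $\lambda_k$ and the gains $\kappa_k$. A related delicate point is the uniform boundedness of $\dot r_k$ and $\dot\xi_k$, which enter $\dot\varepsilon_k$: since $r_k$ depends on $\cen,\rad$ and on all lower-stage transformed errors, its derivative has to be estimated recursively using Lemma \ref{lemma:cont} and the previously established boundedness of $\varepsilon_1,\ldots,\varepsilon_{k-1}$, which must be threaded carefully through the induction.
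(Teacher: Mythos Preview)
Your three-step skeleton (local existence on the open constraint set, Lyapunov boundedness, extension by contradiction) coincides with the paper's. The Stage~1 Lyapunov computation you sketch is also essentially what the paper does. Where you diverge is in the handling of the cross-stage terms.

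You propose a composite Lyapunov function $V=\sum_k\lambda_k V_k$ and use Young's inequality to pass the residual $g_k(x_{k+1}-r_{k+1})$ up to the next stage as a $d_k\|\varepsilon_{k+1}\|^2$ term, then balance weights. The paper instead uses the standard \emph{modular} prescribed-performance argument: on the maximal interval $[0,\tau_{\max})$ one has, \emph{by the very definition of the open set} $\Omega$, that $|x_{k+1,i}-r_{k+1,i}|<\gamma_{k+1,i}(t)\leq p_{k+1,i}$. Hence the mismatch $x_{k+1}-r_{k+1}$ is an a~priori bounded disturbance, not a term coupled to $\varepsilon_{k+1}$. It is simply absorbed into the bounded residual $\Phi_k$, and each stage is analyzed \emph{independently}: one shows $\dot V_k<0$ whenever $\|\varepsilon_k\|$ exceeds some $\varepsilon_k^*$, giving ultimate boundedness of $\varepsilon_k$ for any $\kappa_k>0$. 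The sequential order is then: $\varepsilon_1$ bounded $\Rightarrow$ $r_2$ and $\dot r_2$ bounded $\Rightarrow$ $\varepsilon_2$ bounded $\Rightarrow\cdots$, which is exactly the recursion you mention in your last paragraph but without any composite Lyapunov or weight selection.

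Two concrete advantages of the paper's route over yours: (i) it delivers the conclusion for \emph{arbitrary} positive gains $\kappa_k$, as the theorem asserts, whereas your weight-balancing condition $\lambda_{k+1}c_{k+1}>\lambda_k d_k$ threatens to impose gain restrictions; (ii) your target inequality $\dot V\leq-\alpha V+\beta$ with a uniform $\alpha>0$ is delicate at Stage~1 because the damping coefficient is proportional to $\|x_1-\cen\|^2=\rad^2 e_1^2$, which vanishes as $e_1\to 0$. The paper sidesteps this by arguing only ultimate boundedness ($\dot V_1<0$ for $|\varepsilon_1|$ large), which is all that is needed to keep $e_1$ in a compact subset of $(0,1)$. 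Your approach can likely be repaired along these lines, but the modular argument is both shorter and avoids the bookkeeping you correctly flag as the main obstacle.
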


\begin{proof}
We prove the correctness of the proposed control law for Stage $1$, and for Stages $2$ through $N$, we refer to \cite{das2024spatiotemporal}, as the arguments follow identically.

\textbf{Stage $1$:}  
Differentiating $e_1(x_1,t)$ with respect to time $t$ and substituting the system dynamics from \eqref{eqn:sysdyn}, we obtain:
\begin{align}
    \dot{e}_1(x_1,t) &= \Big( \|x_1 -\cen\|^{-1}(x_1 - \cen)^\top(f_1(x_1) + g_1(x_1)x_2 \notag \\
    &- \dot{\cen}(t)) - \dot{\rad}(t)e_1(x_1,t) \Big) / \rad(t) := h_1(e_1,t).
\end{align}
We define the error constraints for $e_1$ through the open and bounded set $\mathbb{D}:=(0,1)$.
The proof proceeds in three steps. First, we show that a maximal solution exists within $\mathbb{D}$ in the maximal time solution interval $[0, \tau_{\max})$. Next, we prove that the proposed control law \eqref{eqn:Control_ras} ensures $e_1(x_1,t)$ remains in a compact subset of $\mathbb{D}$. Finally, we prove that $\tau_{\max}$ can be extended to $\infty$.

\underline{\textit{Step (i):}}  
Given $\|x_1(0) - \cen(0)\| \leq r(0)$, the initial error $e_1(x_1(0),0)$ lies in $\mathbb{D}$. Since $\cen(t)$, $\rad(t)$ are smooth and bounded (Lemma~\ref{lemma:cont}), $f_1(x_1)$, $g_1(x_1)$ are locally Lipschitz, and the control law $r_2(x_1,t)$ is smooth in $\mathbb{D}$, the dynamics $h_1(e_1,t)$ is locally Lipschitz in $e_1$ and continuous in $t$. Hence, by \cite[Theorem 54]{sontag}, there exists a maximal solution $e_1 : [0, \tau_{\max}) \rightarrow \mathbb{D}$ such that $e_1(t) \in \mathbb{D}$ for all $t \in [0, \tau_{\max})$.

\underline{\textit{Step (ii):}}  
Consider the Lyapunov candidate $V_1 = 0.5\varepsilon_1^2$. Differentiating $V_1$ w.r.t. $t$, and substituting $\dot{\varepsilon}_1$, $\dot{e}_1$, and the system dynamics with the control law \eqref{eqn:Control_ras}, we get:
\begin{align*}
    &\dot{V}_1 = \varepsilon_1 \dot{\varepsilon}_1
    =\frac{2\varepsilon_1}{\rad(1-e_1^2)} \Big( \frac{(x_1-\cen)^\top}{\norm{x_1-\cen}}(\dot{x}_1-\dot{\cen})-\dot{\rad}e_1 \Big) \\
    &=  \frac{2\varepsilon_1}{\rad(1-e_1^2)} \Big( \frac{(x_1-\cen)^\top}{\norm{x_1-\cen}}g_1(x_1)x_2+ {\Phi_1}\Big)\\
    &=\frac{2}{\rad(1-e_1^2)}\Big(\frac{-\varepsilon_1 \kappa}{\norm{x_1-\cen}}(x_1-\cen)^\top g_1(x_1)(x_1-\cen)+\varepsilon_1 \Phi_1 \Big),  
\end{align*}
where $ \Phi_1 := \frac{(x_1 - \cen)^\top}{\|x_1 - \cen\|}(f_1(x_1) + d_1 - \dot{\cen}(t) - \dot{\rad}(t)e_1).$
Using the Rayleigh-Ritz inequality and Assumption~\ref{assum:pd}, we have,
$\underline{g}_1\norm{x_1-\cen}^2 \leq \lambda_{min}(g_1(x_1))\norm{x_1-\cen^2}^2 \leq (x_1-\cen)^\top g_1(x_1)(x_1-\cen)$,
which leads to:
\begin{align*}
\dot{V}_1 &\leq \alpha_1 \left(-\kappa \varepsilon_1^2 \underline{g}_1 \|x_1 - \cen\|^2 + \varepsilon_1 \|\Phi_1\| \right), 
\end{align*}
where $\alpha_1 = \frac{2}{\rad(1 - e_1^2)} > 0$.
From Lemma~\ref{lemma:cont}, the functions $\cen(t)$, $\dot{\cen}(t)$, $\rad(t)$, $\dot{\rad}(t)$ are all bounded. Since $x_1(t)$ remains in the tube by Step (i), and $f_1$, $g_1$ are continuous, it follows that $\|\Phi_1\| < \infty$ for all $t \in [0, \tau_{\max})$.
Now, for some $\theta \in (0,1)$, we add and subtract $\kappa \alpha_1 \varepsilon_1^2 \underline{g}_1 \theta \|x_1 - c\|^2$:
\begin{align*}
       \dot{V}_1& \leq \alpha_1 \Big (-\kappa_1 \varepsilon_1^2 \underline{g}_1 (1-\theta)\norm{x_1-\cen}^2\\
       & \hspace{1.8cm} -(\kappa_1 \underline{g}_1 \varepsilon_1^2 \theta \norm{x_1-\cen}^2- \|\varepsilon_1\| \norm{\Phi_1}) \Big)\nonumber\\
       &\leq -\alpha_1 \varepsilon_1^2 \underline{g}_1 (1-\theta) \norm{x_1-\cen}^2, \\
       &\hspace{2cm} \forall \kappa_1 \underline{g}_1 \|\varepsilon_1\| \theta \norm{x_1-\cen}^2-\norm{\Phi_1} \geq 0\nonumber \\
       &\leq  -\alpha_1 \varepsilon_1^2 \underline{g}_1 (1-\theta) \norm{x_1-\cen}^2, \\
       &\forall \norm{\varepsilon_1}\geq \frac{\norm{\Phi_1}}{\kappa_1 \underline{g}_1 \theta \norm{x_1-\cen}^2} := \varepsilon_1^*, \ \forall t\in [0.\tau_{max}).
\end{align*}

Thus, we can conclude that there exists a time-independent upper bound $\varepsilon_1^* \in \mathbb{R}^+_0$ to the transformed error, i.e., $\|\varepsilon_1\| \leq \varepsilon_1^*$, for all $t\in[0,\tau_{max})$. Inverting the transformation, we can express the bounds on $e_1$ as:
\begin{align*}
    0 \leq e_{1} \leq \overline{e}_{1} := \frac{e_{1}^{\varepsilon_{1}^*}-1}{e_{1}^{\varepsilon_{1}^*}+1} < 1.
\end{align*}
Thus, $e_1(t) \in [0, \overline{e}_{1}] =: \mathbb{D}' \subset \mathbb{D}$ for all $t \in [0, \tau_{\max})$.

\underline{\textit{Step (iii):}}  
Since $e_1(t)$ remains in the compact subset $\mathbb{D}' \subset \mathbb{D}$ for all $t \in [0, \tau_{\max})$, the solution cannot escape $\mathbb{D}$ in finite time. By contradiction, if $\tau_{\max} < \infty$, then by \cite[Prop. C.3.6]{sontag}, there exist $t' < \tau_{\max}$ such that $e_1(t') \notin \mathbb{D}$, which contradicts Step (ii). Hence, the solution exists for all $t \geq 0$, i.e., $\tau_{\max} = \infty$.

\textbf{Stages $k \in [2, N]$:}  
For the remaining stages, we apply the same reasoning as presented in Theorem 4.1 of \cite{das2024spatiotemporal}, and is thus omitted here for brevity.

This completes the proof, showing that the control law in \eqref{eqn:Control_ras} enforces the tube constraint \eqref{eqn:stt_con}, and thereby ensures satisfaction of the T-RAS task.
\end{proof}
{
\begin{remark}
The time-dependent control law in \eqref{eqn:Control_ras} is closed-form and model-free, ensuring satisfaction of the T-RAS specifications for control-affine MIMO pure-feedback systems, requiring no explicit knowledge or approximation of the system dynamics $f$ and $g$.
\end{remark}
}

\begin{figure}[t]
    \centering
    \includegraphics[width=0.7\textwidth]{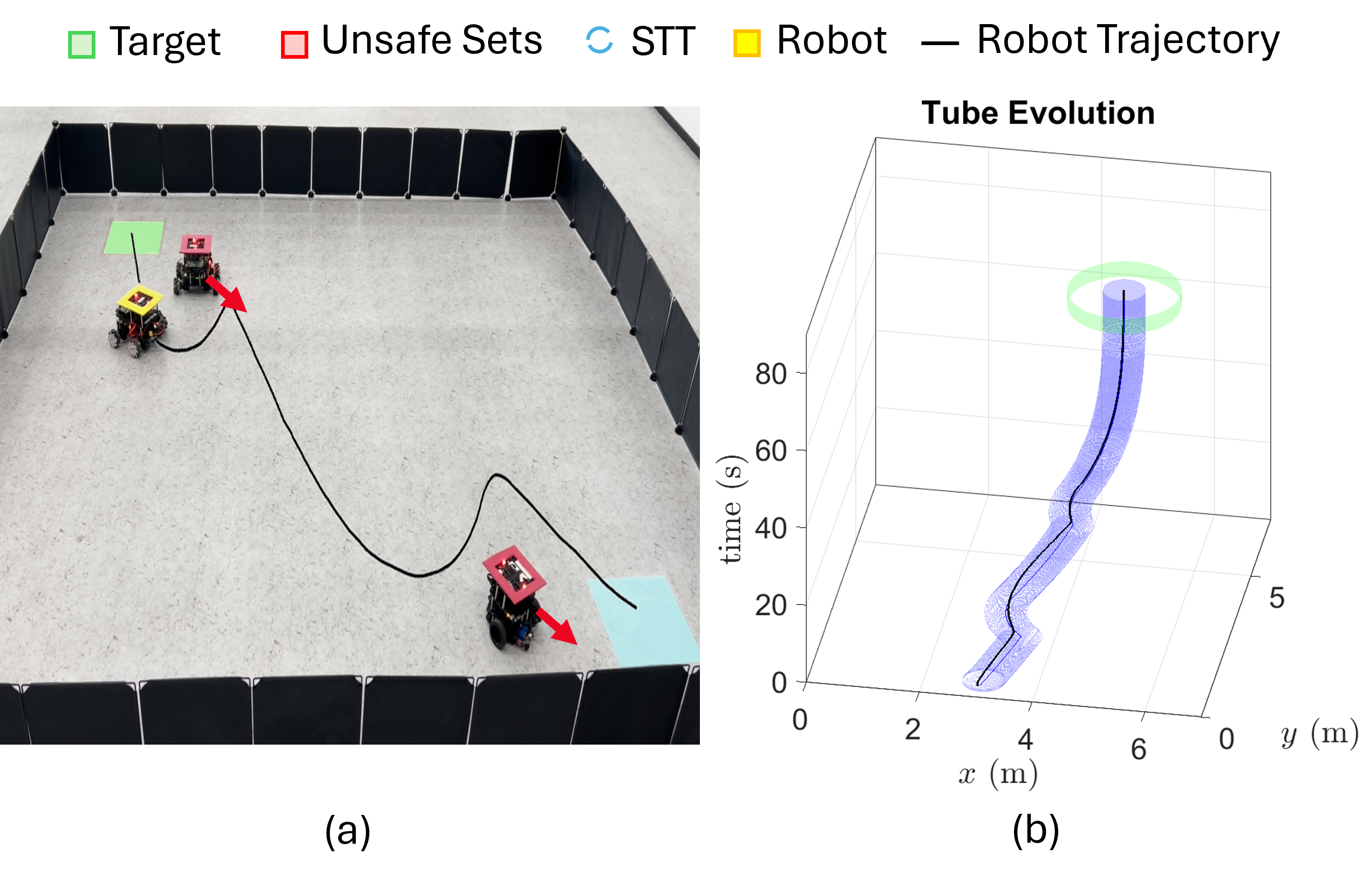}
    \caption{Mobile Robot Hardware Results. (a) Robot trajectory. (b) STT evolution. \href{https://youtu.be/BjgmxghPdWk}{[Video]}}
    \label{fig:omni_hardware}
\end{figure}

\section{Case Studies}
We validate the proposed real-time STT framework through two case studies, a 2D mobile robot and a 3D quadrotor, supported by simulations and real-world robot experiments.
{The parameters used for these examples are listed in Table~\ref{tab:params}.}

\subsection{Mobile Robot}
We consider an omnidirectional mobile robot in a 2D environment with dynamics adapted from \cite{NAHS}. To test the robustness of our approach, we also introduce unknown but bounded disturbances. The robot starts from the initial set $\So = \mathcal{B}([1,1]^\top,1)$ and must reach the target $\T = \mathcal{B}([8,8]^\top,1)$ within a prescribed time $t_c = 18$ s, while avoiding multiple dynamic obstacles.
Figure~\ref{fig:omni_sim} shows the robot’s trajectory at different time stamps and the STT. 

We also experimentally validated the approach using a hardware setup with a robot navigating around two moving obstacles. Figure~\ref{fig:omni_hardware} shows the setup and the STT. The simulation and hardware videos are available at \href{https://youtu.be/BjgmxghPdWk}{Video Link}.

\begin{figure}[t]
    \centering
    \includegraphics[width=0.7\textwidth]{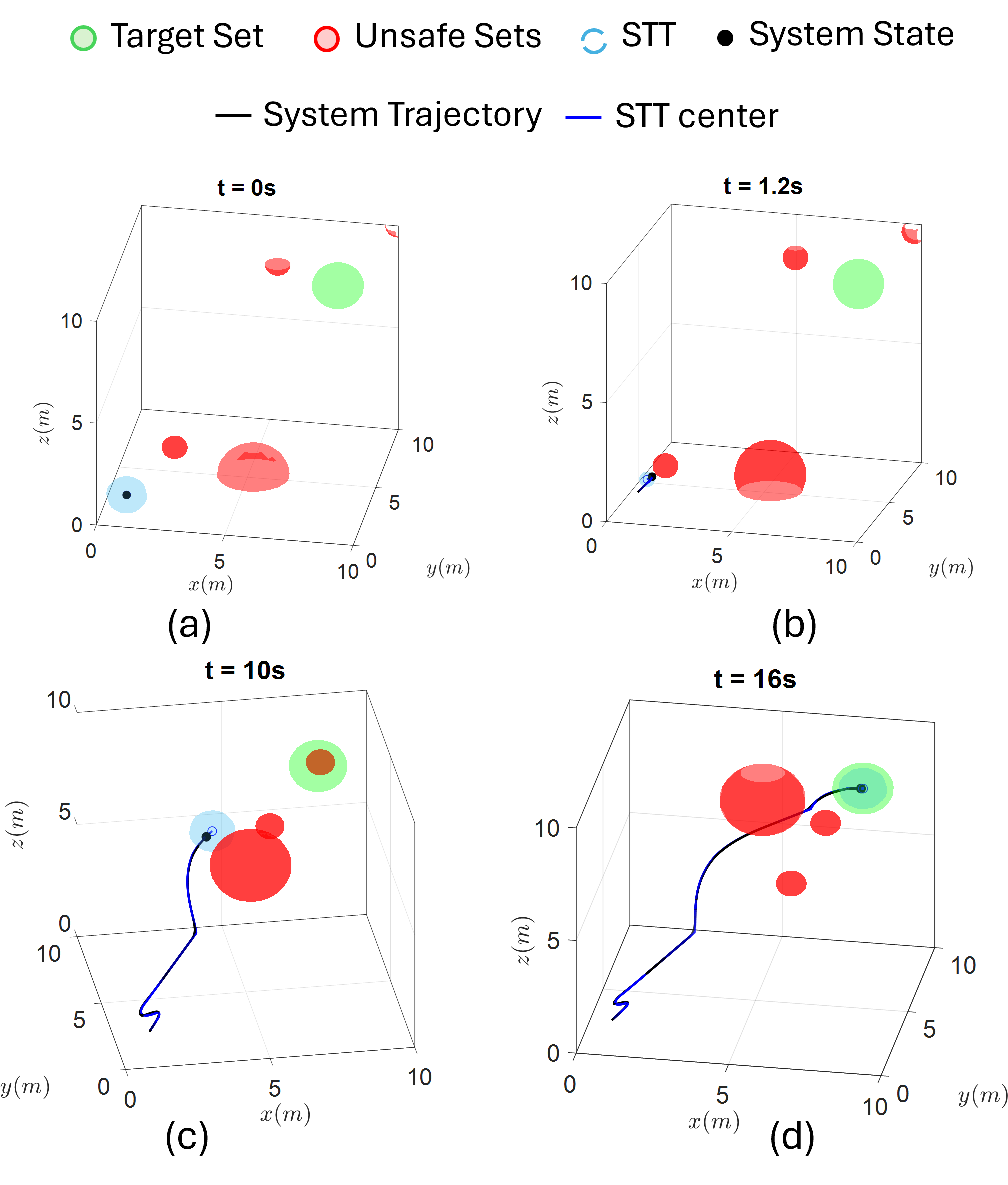}
    \caption{Quadrotor Simulation. System trajectory at different time instants. \href{https://youtu.be/BjgmxghPdWk}{[Video]}}
    \label{fig:uav_sim}
\end{figure}

\subsection{Quadrotor}
We further evaluate the framework on a Quadrotor operating in a 3D environment with second-order dynamics adapted from \cite{APF_drone}. As STTs explicitly address robustness to disturbances, we include unknown but bounded disturbance terms.
The system starts from $\So = \mathcal{B}([1,1,1]^\top, 1)$ and must reach the target $\T = \mathcal{B}([8,8,8]^\top, 1)$ within a prescribed time of $t_c = 16$~s, while avoiding multiple dynamic obstacles.
Figure~\ref{fig:uav_sim} shows the Quadrotor's trajectory and the surrounding STT at different times. 
The simulation video is available at \href{https://youtu.be/BjgmxghPdWk}{Video Link}.

\begin{table}[t]
\caption{Experimental Parameters}
\label{tab:params}
\centering
\renewcommand{\arraystretch}{1.2}
\begin{tabular}{lcc}
\hline
\textbf{Parameter} & \textbf{Mobile Robot} & \textbf{Quadrotor} \\
\hline
Initial center $\cen(0)$ & $[1,1]^\top$ & $[1,1,1]^\top$ \\
Initial radius $\rad(0)$ & $0.9$ & $0.9$ \\
Maximum radius $\rad_{\max}$ & $0.9$ & $0.9$ \\
Minimum radius $\rad_{\min}$ & $0.1$ & $0.1$ \\
Smoothening parameter $\nu$ & $8$ & $8$ \\
Gains $(k_1,k_{2,j},k_{3,j})$ & $(600,600,600)$ & $(300,1000,300)$ \\
\hline
\end{tabular}
\end{table}

\begin{table*}[t]
\caption{Comparison With Baseline Algorithms}
\label{tab:comparison}
\centering
\renewcommand{\arraystretch}{1.2}
\begin{tabular}{lcccccccc}
\hline
\multirow{2}{*}{\textbf{Method}} &
\multicolumn{2}{c}{\textbf{Computation Time (ms)}} &
\multicolumn{2}{c}{\textbf{Path Length (m)}} &
\multicolumn{2}{c}{\textbf{Path Smoothness}} &
\multicolumn{2}{c}{\textbf{Success Rate (\%)}} \\
\cline{2-9}
& Mean & SD & Mean & SD & Mean & SD & Nominal & Disturbed \\
\hline
DWA & 23.113 & 14.111 & 11.177 & 1.626 & 2.991 & 7.522 & 96.0 & 94.0 \\
APF & 0.023 & 0.012 & 10.401 & 0.765 & 0.376 & 0.317 & 66.0 & 58.0 \\
NMPC & 19.070 & 9.213 & 10.992 & 1.075 & 1.055 & 1.771 & 98.0 & 98.0 \\
CBF & 2.322 & 0.962 & 10.373 & 0.741 & 1.566 & 4.155 & 100.0 & 86.0 \\
\textbf{Real-time STT} & \textbf{0.037} & \textbf{0.017} & \textbf{11.540} & \textbf{1.440} & \textbf{1.570} & \textbf{3.930} & \textbf{100.0} & \textbf{100.0} \\
\hline
\end{tabular}
\end{table*}

\begin{table}[t]
\caption{Scalability and Safety Results}
\label{tab:comparison_2}
\centering
\renewcommand{\arraystretch}{1.2}
\begin{tabular}{ccccc}
\hline
\multirow{2}{*}{$n_o$} &
\multicolumn{2}{c}{\textbf{2D Mobile Robot}} &
\multicolumn{2}{c}{\textbf{3D Quadrotor}} \\
\cline{2-5}
& Min. Clear. (m) & Time (s) & Min. Clear. (m) & Time (s) \\
\hline
1 & 0.0760 & 0.0051 & 0.4830 & 0.0055 \\
10 & 0.1657 & 0.0192 & 0.2489 & 0.0221 \\
50 & 0.0993 & 0.1106 & 0.1640 & 0.0895 \\
100 & 0.0464 & 0.2460 & 0.2528 & 0.1929 \\
\hline
\end{tabular}
\end{table}

\begin{figure}[t]
    \centering
    \includegraphics[width=0.5\textwidth]{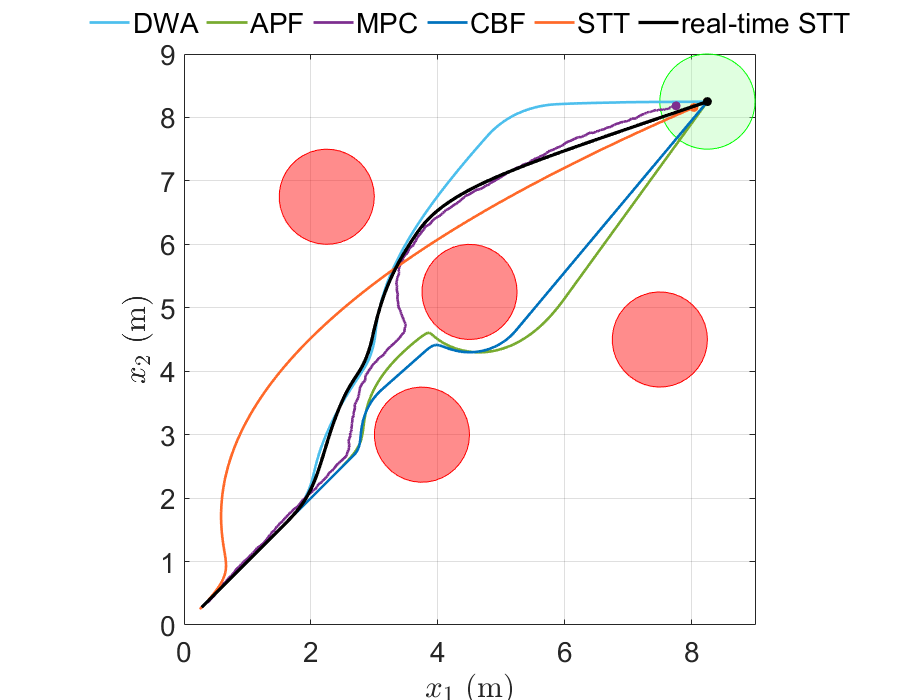}
    \caption{Trajectory Comparison Between Baseline Methods and the Proposed Real-Time STT Approach}
    \label{fig:compare}
\end{figure}

{
\section{Comparison and Discussion}
We compare the proposed real-time STT controller with four baseline algorithms: DWA \cite{fox2002dynamic}, APF \cite{APF_global}, NMPC \cite{MPC}, and CBF \cite{CBF-RRT1}. The evaluation is done over 100 randomized dynamic environments with 5–50 moving obstacles. Each method is tested in terms of computation time, path length, path smoothness, and success rate (percentage of runs completed without collision), both with and without bounded disturbances.
}

{
As shown in Table~\ref{tab:comparison}, the proposed STT framework achieves the highest success rate and the lowest computation time, even under disturbances. DWA and NMPC perform reasonably well but with significantly higher computation times, while APF struggles with local minima as the number of obstacles increases. CBF performs well in nominal conditions but degrades when system dynamics are uncertain. Moreover, none of these baseline methods provides prescribed-time guarantees.}

{
Table~\ref{tab:comparison} also reports the path length and smoothness (quantified as the mean curvature of the trajectory and a lower mean curvature indicates a smoother and more consistent path). The proposed framework generates smooth and collision-free trajectories, though not necessarily the shortest paths. This is because the controller operates based only on local, real-time information rather than global knowledge. In future work, we plan to combine the real-time STT controller with a global planner to achieve shorter, more optimal paths.
}

{
Figure~\ref{fig:compare} shows trajectories for a static environment. While the STT-based control law in \cite{das2024spatiotemporal} is closed-form and fast during execution, its offline tube generation step is computationally expensive, taking over 5.6~s even for this simple static case. Although the offline STT synthesis can handle dynamic obstacles if their trajectories are known in advance, its computational cost grows rapidly with the number of obstacles, making it unsuitable for real-time deployment in dense, unpredictable environments. 
}

{
To highlight the scalability of the proposed framework, we further evaluated how performance changes with increasing obstacle density and system dimensionality. Specifically, we tested setups with 1, 10, 50, and 100 obstacles in both 2D and 3D environments, running 50 randomized trials for each configuration. For each case, we measured the minimum clearance from obstacles and the average computation time. The results in Table~\ref{tab:comparison_2} show that the proposed approach consistently maintains safe separation and low computation times in all trials, demonstrating that the real-time STT framework remains robust and scalable even in dense, high-dimensional environments.}

\section{Conclusion}
In this work, we presented a real-time framework for synthesizing STTs that ensure safe control of nonlinear pure-feedback systems with unknown dynamics, under T-RAS tasks. The proposed approach dynamically adapts the tube’s center and radius using only real-time information, allowing safe operation in dynamic and partially unknown environments. We provided formal guarantees for safety and prescribed-time convergence and demonstrated the approach’s effectiveness and scalability through simulations and hardware experiments with a 2D mobile robot and a 3D UAV in cluttered, dynamic settings.

In future work, we plan to extend this framework to handle multi-agent coordination, integrate high-level temporal logic specifications, and {address systems with non-smooth or hybrid dynamics, and explicit input constraints. We also aim to combine the proposed controller with global planning to improve task feasibility in complex environments.}

\bibliographystyle{unsrt} 
\bibliography{sources} 

\end{document}